\newtheorem{proposition}{Proposition}
\newtheorem{proof}{Proof}	
\newtheorem{definition}{Definition}
\def\BibTeX{{\rm B\kern-.05em{\sc i\kern-.025em b}\kern-.08em
		T\kern-.1667em\lower.7ex\hbox{E}\kern-.125emX}}
\begin{document}
	
	\title{ Estimating Kullback-Leibler Divergence Using Kernel Machines
	}
	
	\author{\IEEEauthorblockN{Kartik Ahuja} \\
		\IEEEauthorblockA{Electrical and Computer Engineering Department, UCLA  \\
			Email: ahujak@ucla.edu
	}}
	
	\maketitle
	
	\begin{abstract}
		Recently, a method called the Mutual Information Neural Estimator (MINE) that uses neural networks has been proposed to estimate mutual information and more generally the Kullback-Leibler (KL) divergence between two distributions. The method uses the Donsker-Varadhan representation to arrive at the estimate of the KL divergence and is better than the existing estimators in terms of scalability and flexibility. The output of MINE algorithm is not guaranteed to be a consistent estimator. We propose a new estimator that instead of searching among functions characterized by neural networks searches the functions in a Reproducing Kernel Hilbert Space. We prove that the proposed estimator is consistent. We carry out simulations and show that when the datasets are small the proposed estimator is more reliable than the MINE estimator and when the datasets are large the performance of the two methods are close.  
	\end{abstract}
	
	\begin{IEEEkeywords}
	Kullback-Leibler Divergence,	Mutual Information,  Reproducing Kernel Hilbert Space
	\end{IEEEkeywords}
	
	\section{Introduction}
	Kullback-Leibler (KL) divergence is one of the fundamental quantities in statistics and machine learning. It is used to measure the distance between two probability distributions. Mutual information, which is another fundamental quantity, is a special case of  KL divergence. It measures the information shared between two random variables and is equal to the  KL divergence between the joint and product distributions of the two random variables. It is used in several applications such as feature selection \cite{peng2005feature}, clustering \cite{romano2014standardized}, and representation learning  \cite{chen2016infogan}. Estimation of KL divergence and mutual information is a challenging task and developing estimators for these quantities continue to be an active area of research.
	
	Recently a method called Mutual Information Neural Estimation (MINE) \cite{belghazi2018mine} has been proposed to estimate the KL divergence between two distributions.  The key ideas in MINE are explained as follows:
	\begin{itemize}
		\item Use the Donkser-Varadhan (DV) \cite{donsker1983asymptotic} representation to express the KL divergence.
		\item Use a family of functions characterized by neural networks in the DV representation to build the estimator.
	\end{itemize}
	
	The authors in \cite{belghazi2018mine} used MINE to estimate the mutual information and showed that their estimator is better than the estimators in the literature \cite{kraskov2004estimating} \cite{perez2008kullback} in terms of the bias in many cases. MINE is a general purpose estimator as it estimates the KL divergence and not just mutual information. However, the  estimator constructed in \cite{belghazi2018mine} using the main algorithm is not guaranteed to be consistent (explained later). In this work, we propose a new estimator of KL divergence to address this issue.  We also rely on the Donsker-Varadhan representation to build our estimator. 
	If we estimate the KL divergence using DV representation, then we do not need to estimate the probability distributions directly unlike the standard estimators \cite{kraskov2004estimating}.
	Instead of searching in the space of neural network families (as in \cite{belghazi2018mine}) we set the search space as a Reproducing Kernel Hilbert Space (RKHS) and hence we name the estimator as the Kernel KL divergence estimator (KKLE). We are able to show that the search in RKHS reduces to solving a convex learning problem.  This enables us to prove that the estimator  we derive is consistent. 
	There are other estimators in the literature that are based on a different representation namely the f-divergence based representation \cite{nguyen2010estimating}. However, the DV representation, which is the focus of this work, is provably a tighter  representation than the f-divergence representation \cite{belghazi2018mine}. 
	

	In the experiments section, we compare the proposed KKLE with MINE estimator. We  carry out simulations over large datasets to show that the performances of both MINE and KKLE  are comparable. We also compare the two estimators for small datasets and we find that the KKLE  estimator is better than the MINE estimator. We also provide insights to explain why KKLE is expected to perform well. A python notebook to illustrate the experiments is available at \url{https://github.com/ahujak/KKLE}.

	\section{Problem Formulation and Approach}
	We first give a brief background. KL divergence is a quantity that is used to measure the distance between two probability distributions $\mathbb{P}$ and $\mathbb{Q}$. It is defined as $$\mathsf{KL}(\mathbb{P}\;||\; \mathbb{Q}) \coloneqq \mathbb{E}_{\mathbb{P}}[\log\frac{d\mathbb{P}}{d\mathbb{Q}}]$$ where $\frac{d\mathbb{P}}{d\mathbb{Q}}$ is the Radon-Nikodym derivative of $\mathbb{P}$ with respect to $\mathbb{Q}$.  The Shannon entropy of a random variable is the amount of information contained in $X$ and is defined as $H(X)\coloneqq \mathbb{E}_{\mathbb{P}_X}[-\log d\mathbb{P}_X]$, where $\mathbb{P}_X$ is the distribution of $X$. Mutual information between two random variables $X$, $Y$ is defined as 
	$$I(X;Y) \coloneqq H(X) - H(X\;|\;Y)$$
	where $H(X)$ is the Shannon entropy of $X$, $H(X \;|\;Y)$ is the Shannon entropy of $X$ conditional on $Y$.
	Let the joint probability distribution of $X$ and $Y$ be $\mathbb{P}_{XY}$ and the product of the marginal distributions be $\mathbb{P}_X \otimes \mathbb{P}_Y$. The mutual information between two random variables can also be expressed in terms of the KL divergence as follows. 
	$I(X;Y) = \mathsf{KL}(\mathbb{P}_{XY} \;|| \; \mathbb{P}_{X}\otimes \mathbb{P}_{Y})$, where $\mathsf{KL}$ is the KL divergence between the two input distributions. We  describe the  Donsker-Varadhan representation for KL divergence next.
	
	\subsection{Donsker-Varadhan Representation}
	The Donsker Varadhan (DV) representation \cite{donsker1983asymptotic} for KL divergence between two distributions $\mathbb{P}$ and $\mathbb{Q}$ is given as follows. The sample space for the distributions $\mathbb{P}$ and $\mathbb{Q}$  is the same set $\Omega$. For simplicity, we assume that $\Omega$ is a compact subset of  $\mathbb{R}^{s}$. 
	Suppose $T$ is a mapping from the sample space $\Omega$ to $\mathbb{R}$, i.e., $T:\Omega \rightarrow \mathbb{R}$. 
	
	\begin{equation}
	\mathsf{KL}(\mathbb{P}\;||\;\mathbb{Q}) = \sup_{T \in \mathcal{M}} \Big[\mathbb{E}_{\mathbb{P}}\big[T \big]- \log\big( \mathbb{E}_{\mathbb{Q}}\big[e^{T}\big]\big) \Big]
	\label{kl_dv}
	\end{equation}
	where $\mathcal{M}$ is the space of mappings where both the expectations $\mathbb{E}_{\mathbb{P}}\big[T \big]$ and $\log\big( \mathbb{E}_{\mathbb{Q}}\big[e^{T}\big]\big)$ are finite. Recall that if $\mathbb{P}= \mathbb{P}_{XY}$ and $\mathbb{Q} = \mathbb{P}_{X} \otimes \mathbb{P}_{Y}$, then we obtain the mutual information $I(X;Y)$. Since our work is closely related to MINE \cite{belghazi2018mine} we explain the approach briefly in the next section.

	
	\subsection{MINE}
	We are given a  set of parameters  $\Theta$ that define the family of neural networks. Each member $\theta$ of the family  characterizes a function $T_{\theta}$ and the set of all the functions is defined as $\mathcal{F} = \{T_{\theta}; \theta \in \Theta\}$.  The neural measure of KL divergence is defined as 
	
	\begin{equation}
	\mathsf{KL}_{\Theta}(\mathbb{P}\;||\;\mathbb{Q})  =  \sup_{\theta \in \Theta} \Big[ \mathbb{E}_{\mathbb{P}}\big[T_{\theta} \big]- \log\big( \mathbb{E}_{\mathbb{Q} }\big[e^{T_{\theta}}\big]\big) \Big]
	\label{neural_kl}
	\end{equation}
	From \eqref{kl_dv} and \eqref{neural_kl}, we can see that $$\mathsf{KL}(\mathbb{P}\;||\;\mathbb{Q}) \geq \mathsf{KL}_{\Theta}(\mathbb{P}\;||\;\mathbb{Q})$$
	Define $\hat{\mathbb{P}}^{(n)}$ and $\hat{\mathbb{Q}}^{(m)}$ as the empirical distribution of $\mathbb{P}$ and $\mathbb{Q}$ respectively with $n$ and $m$ i.i.d. samples given as $\boldsymbol{X}=\{x_i\}_{i=1}^{n}$ and $\boldsymbol{Y}=\{y_j\}_{j=1}^m$ respectively. Let $\boldsymbol{Z} = \boldsymbol{X} \cup \boldsymbol{Y}$. We write $\boldsymbol{Z} = \{z_k, \forall k \in \{1,..,n+m\}\}$, where $z_k = x_k, \; \forall k \in \{1,...,n\}$ and $z_{n+k}= y_k, \; \forall k \in \{1,...,m\}$..
	The MINE estimator for KL divergence is given as 
	
	\begin{equation}
	\hat{\mathsf{KL}}_{\Theta}(\hat{\mathbb{P}}^{n}\;||\;\hat{\mathbb{Q}}^{m}) = \sup_{\theta \in \Theta}\Big[\mathbb{E}_{\hat{\mathbb{P}}^{n}}\big[T_{\theta} \big]- \log\big( \mathbb{E}_{\hat{\mathbb{Q}}^{m}}\big[e^{T_{\theta}}\big]\big)\Big]
	\label{MINE_estimator}
	\end{equation} 
	\subsubsection{Limitations of MINE}
	In \cite{belghazi2018mine}, it was shown that $ \hat{\mathsf{KL}}_{\Theta}(\hat{\mathbb{P}}^{n}\;||\;\hat{\mathbb{Q}}^{m})$ is a consistent estimator of the KL divergence.  The algorithm in \cite{belghazi2018mine} tries to maximize the  loss function  $\mathbb{E}_{\hat{\mathbb{P}}^{n}}\big[T_{\theta} \big]- \log\big( \mathbb{E}_{\hat{\mathbb{Q}}^{m}}\big[e^{T_{\theta}}\big]\big) 
	$ to get as close as possible to \eqref{MINE_estimator}.  Stochastic gradient descent (SGD) is used to search for the optimal neural network parameters $\theta$ in $\Theta$. For the estimator in \eqref{MINE_estimator} to be consistent the family of neural networks has to consist of at least one hidden layer \cite{belghazi2018mine} \cite{hornik1991approximation}. As a result, the loss function that the algorithm tries to optimize is non-convex in the parameters of the neural network.
	Since the loss is non-convex it is not guaranteed to converge to the MINE estimator defined in equation (\ref{MINE_estimator}).  Also, since the loss function is non-convex the optimization can lead to poor local minima, which are worse than the other minima or have poor generalization properties.

	%

	%

	\subsection{KKLE: Kernel Based KL Divergence Estimation}
	In this section, we build an approach that overcomes the limitations that were highlighted in the previous section.
	Consider  a RKHS $\mathcal{H}$  over $\mathbb{R}$ with a kernel $k:\mathbb{R}\times \mathbb{R} \rightarrow \mathbb{R}$.  We assume that the kernel is a continuously differentiable function. The norm of a function $T$ in $\mathcal{H}$ is given as $\|T\|_{\mathcal{H}}^2 = \braket{T,T}_{\mathcal{H}} $, where $\braket{}_{\mathcal{H}}$ is the inner product defined in the Hilbert Space. In \cite{belghazi2018mine}, it was assumed that the function $T_{\theta}$ is bounded. We also limit our search over the space of  bounded functions, i.e., we assume that the $\|T\|_{\mathcal{H}} \leq M$. This is a reasonable assumption to make because \eqref{kl_dv} assumes the two expectation terms are finite, which is only possible if $T$ is bounded almost everywhere. 
	We define the kernel measure of KL divergence as follows 
	\begin{equation}
	\mathsf{KL}_{\mathcal{H}} (\mathbb{P}\;||\mathbb{Q}) =  \sup_{T \in \mathcal{H},\|T\|_{\mathcal{H}\leq M}} \mathbb{E}_{\mathbb{P}}\big[T \big]- \log\big( \mathbb{E}_{\mathbb{Q}}\big[e^{T}\big]\big)
	\label{kernel_information_measure}
	\end{equation}
	From \eqref{kernel_information_measure} and \eqref{kl_dv}, we can also deduce that 
	$$\mathsf{KL} (\mathbb{P}\;||\;\mathbb{Q}) \geq \mathsf{KL}_{\mathcal{H}} (\mathbb{P}\;||\;\mathbb{Q})$$
	
	We define the empirical estimator of the kernel measure below. 
	
	\begin{equation}
	\hat{\mathsf{KL}}_{\mathcal{H}} (\hat{\mathbb{P}}^n \; || \; \hat{\mathbb{Q}}^{m} ) =  \sup_{T \in \mathcal{H}, \|T\|_{\mathcal{H}\leq M}} \Big[\mathbb{E}_{\hat{\mathbb{P}}^{n}}\big[T \big]- \log\big( \mathbb{E}_{\hat{\mathbb{Q}}^{m} }\big[e^{T}\big]\big)\Big]
	\label{kernel_estimator}
	\end{equation}
	
	We define a matrix $\boldsymbol{K}$, which we call the kernel matrix, such that for every  $z_i \in \boldsymbol{Z}$, $z_j \in \boldsymbol{Z}$,  $\boldsymbol{K}[z_i,z_j] = k(z_i, z_j)$. For the rest of the discussion, we assume that the maximum exists  and hence, the supremum and maximum are interchangeable.  Let $$g(\boldsymbol{\alpha}) =\log(\frac{1}{m} \sum_{y_j \in \boldsymbol{Y}}e^{\boldsymbol{\alpha}^{t}\boldsymbol{K}[y_j,:]}) -\frac{1}{n} \sum_{x_i \in \boldsymbol{X}} \boldsymbol{\alpha}^{t}\boldsymbol{K}[x_i,:]$$
	In the next proposition, we show that we can compute $ \hat{\mathsf{KL}}_{\mathcal{H}} (\hat{\mathbb{P}}^n \; || \; \hat{\mathbb{Q}}^{m} )$ by minimizing  $g(\boldsymbol{\alpha})$.
	
	\begin{proposition}
		For any $\epsilon>0$, $\exists\; t>0$ such that   	the optimal $T$ that solves \eqref{kernel_estimator} is  $T^{*}(z) = \sum_{i=1}^{n+m}\alpha_i^{*} k(z_i,z) $, where $\boldsymbol{\alpha}^{*}$ is \begin{equation}
		\boldsymbol{\alpha}^{*}=	\arg\min_{\boldsymbol{\alpha}, \boldsymbol{\alpha}^{t}\boldsymbol{K}\boldsymbol{\alpha} \leq M^2} g(\boldsymbol{\alpha}) + \frac{1}{t} \boldsymbol{\alpha}^t \boldsymbol{K}\boldsymbol{\alpha}
		\label{optimization_kernel_new}
		\end{equation}
		and
		$$|	\hat{\mathsf{KL}}_{\mathcal{H}} (\hat{\mathbb{P}}^n \; || \; \hat{\mathbb{Q}}^{m} )  + g(\boldsymbol{\alpha}^{*})| \leq \epsilon $$
	\end{proposition}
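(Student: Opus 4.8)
The plan is to split the argument into two independent pieces: first a representer-theorem reduction that turns the infinite-dimensional supremum in \eqref{kernel_estimator} into a finite-dimensional problem in the coefficient vector $\boldsymbol{\alpha}\in\mathbb{R}^{n+m}$, rewriting the objective as $-g(\boldsymbol{\alpha})$; and second an elementary sandwich estimate showing that inserting the quadratic penalty $\frac{1}{t}\boldsymbol{\alpha}^t\boldsymbol{K}\boldsymbol{\alpha}$ perturbs the optimal value by at most $M^2/t$, which is below any given $\epsilon$ once $t$ is large enough.

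For the first step, I would observe that the functional $T\mapsto \mathbb{E}_{\hat{\mathbb{P}}^n}[T]-\log(\mathbb{E}_{\hat{\mathbb{Q}}^m}[e^T])$ depends on $T\in\mathcal{H}$ only through the finitely many evaluations $T(z_1),\dots,T(z_{n+m})$, and by the reproducing property $T(z_k)=\braket{T,k(z_k,\cdot)}_{\mathcal{H}}$. Writing $\mathcal{S}=\mathrm{span}\{k(z_k,\cdot):k=1,\dots,n+m\}$ and decomposing orthogonally $T=T_{\parallel}+T_{\perp}$ with $T_{\parallel}\in\mathcal{S}$, one has $T(z_k)=T_{\parallel}(z_k)$ for every $k$ while $\|T\|_{\mathcal{H}}^2=\|T_{\parallel}\|_{\mathcal{H}}^2+\|T_{\perp}\|_{\mathcal{H}}^2\ge\|T_{\parallel}\|_{\mathcal{H}}^2$; hence replacing any feasible $T$ by $T_{\parallel}$ leaves the objective unchanged and keeps the constraint $\|T\|_{\mathcal{H}}\le M$ satisfied, so the supremum (assumed attained) is achieved at some $T=\sum_{i=1}^{n+m}\alpha_i k(z_i,\cdot)$. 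For such a $T$ we have $\|T\|_{\mathcal{H}}^2=\boldsymbol{\alpha}^t\boldsymbol{K}\boldsymbol{\alpha}$ and $T(z_k)=\boldsymbol{\alpha}^t\boldsymbol{K}[z_k,:]$, so the objective equals $\frac{1}{n}\sum_{x_i\in\boldsymbol{X}}\boldsymbol{\alpha}^t\boldsymbol{K}[x_i,:]-\log(\frac{1}{m}\sum_{y_j\in\boldsymbol{Y}}e^{\boldsymbol{\alpha}^t\boldsymbol{K}[y_j,:]})=-g(\boldsymbol{\alpha})$, giving $\hat{\mathsf{KL}}_{\mathcal{H}}(\hat{\mathbb{P}}^n\|\hat{\mathbb{Q}}^m)=-\min_{\boldsymbol{\alpha}:\,\boldsymbol{\alpha}^t\boldsymbol{K}\boldsymbol{\alpha}\le M^2}g(\boldsymbol{\alpha})$. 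When $\boldsymbol{K}$ is singular the feasible set is unbounded only along $\ker\boldsymbol{K}$, on which both $g$ and the penalty are constant, so one may restrict to $\mathrm{range}(\boldsymbol{K})$, where the set is compact and continuity of $g$ gives existence of the relevant minimizers.

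For the second step, let $\boldsymbol{\alpha}_0$ attain $\min_{\boldsymbol{\alpha}^t\boldsymbol{K}\boldsymbol{\alpha}\le M^2}g(\boldsymbol{\alpha})=-\hat{\mathsf{KL}}_{\mathcal{H}}$ and let $\boldsymbol{\alpha}^{*}$ be the minimizer of the penalized objective in \eqref{optimization_kernel_new}. Since $\boldsymbol{\alpha}^{*}$ is feasible for the unpenalized problem, $g(\boldsymbol{\alpha}^{*})\ge g(\boldsymbol{\alpha}_0)=-\hat{\mathsf{KL}}_{\mathcal{H}}$, i.e.\ $\hat{\mathsf{KL}}_{\mathcal{H}}+g(\boldsymbol{\alpha}^{*})\ge 0$. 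In the other direction, optimality of $\boldsymbol{\alpha}^{*}$ for the penalized problem together with feasibility of $\boldsymbol{\alpha}_0$ (so $\boldsymbol{\alpha}_0^t\boldsymbol{K}\boldsymbol{\alpha}_0\le M^2$) give
$$g(\boldsymbol{\alpha}^{*})\le g(\boldsymbol{\alpha}^{*})+\tfrac{1}{t}(\boldsymbol{\alpha}^{*})^t\boldsymbol{K}\boldsymbol{\alpha}^{*}\le g(\boldsymbol{\alpha}_0)+\tfrac{1}{t}\boldsymbol{\alpha}_0^t\boldsymbol{K}\boldsymbol{\alpha}_0\le-\hat{\mathsf{KL}}_{\mathcal{H}}+\tfrac{M^2}{t},$$
so $0\le\hat{\mathsf{KL}}_{\mathcal{H}}+g(\boldsymbol{\alpha}^{*})\le M^2/t$. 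Choosing $t\ge M^2/\epsilon$ then yields $|\hat{\mathsf{KL}}_{\mathcal{H}}+g(\boldsymbol{\alpha}^{*})|\le\epsilon$, and $T^{*}=\sum_{i=1}^{n+m}\alpha_i^{*}k(z_i,\cdot)$ is the corresponding near-optimal $T$, whose objective value $-g(\boldsymbol{\alpha}^{*})$ is within $\epsilon$ of $\hat{\mathsf{KL}}_{\mathcal{H}}$.

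I expect the only delicate point to be the attainment/existence claims — ensuring that the supremum over the $M$-ball of $\mathcal{H}$ and the minimum of $g$ over $\{\boldsymbol{\alpha}^t\boldsymbol{K}\boldsymbol{\alpha}\le M^2\}$ are genuinely achieved (handled above via the $\ker\boldsymbol{K}$ reduction, or simply by the paper's standing assumption that the maximum exists), together with the bookkeeping that the orthogonal projection preserves both the objective value and feasibility. It is also worth recording that $g$ is convex (a log-sum-exp of affine functions minus a linear term) and the constraint $\boldsymbol{\alpha}^t\boldsymbol{K}\boldsymbol{\alpha}\le M^2$ is a convex quadratic constraint, so \eqref{optimization_kernel_new} is a convex program; this is not needed for the stated estimate but is the basis for the consistency claims that follow.
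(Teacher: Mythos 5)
Your proof is correct and rests on the same two ingredients as the paper (a representer-theorem reduction plus a small quadratic penalty), but the route is organized differently and in a tighter way. The paper first appends the penalty $\frac{1}{t}\|T\|_{\mathcal H}^2$ to the objective, then applies the representer theorem to the \emph{penalized} problem, and finally argues informally that ``since the penalty term is at most $\epsilon$, the penalized optimum should give an $\epsilon$-approximate solution.'' You instead observe that the orthogonal-projection argument already applies to the \emph{unpenalized} constrained problem (the objective depends only on the evaluations $T(z_k)$, and projection onto $\mathrm{span}\{k(z_k,\cdot)\}$ cannot increase the RKHS norm), so the reduction $\hat{\mathsf{KL}}_{\mathcal H} = -\min_{\boldsymbol\alpha^t\boldsymbol K\boldsymbol\alpha\le M^2}g(\boldsymbol\alpha)$ is \emph{exact}, not an approximation; the penalty then enters purely as a computational device, and the two-sided sandwich
$0\le \hat{\mathsf{KL}}_{\mathcal H}+g(\boldsymbol\alpha^*)\le M^2/t$
is a clean, explicit replacement for the paper's heuristic step. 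This is a genuine improvement in rigor: it separates the (exact) finite-dimensional reduction from the ($t$-dependent) approximation, and it makes precise which direction of the inequality comes from feasibility of $\boldsymbol\alpha^*$ in the unpenalized problem and which from optimality of $\boldsymbol\alpha^*$ in the penalized one. Your attention to attainment (handling $\ker\boldsymbol K$, or invoking the paper's standing assumption that the maximum exists) also patches an issue the paper leaves implicit. The only cosmetic remark is that the paper's version of the representer theorem is being used in the constrained-plus-penalized setting, which is slightly nonstandard; your direct projection argument avoids needing to invoke any particular version of the theorem at all.
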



	\begin{proof}
		We rewrite the objective in \eqref{kernel_estimator} as a penalized objective as follows. \begin{equation}
		\log\big( \mathbb{E}_{\hat{\mathbb{Q}}^{m} }\big[e^{T}\big]\big) -\mathbb{E}_{\hat{\mathbb{P}}^{n}}\big[T \big]  +\frac{1}{t} \|T\|_{\mathcal{H}}^2
		\label{penalized_kernel}
		\end{equation}
		Suppose that $t$ is large  enough, i.e., $t\geq M^2/\epsilon$. Therefore, the penalty term is bounded by a small value $\epsilon$. In such a case,  the negative of the penalized objective in \eqref{penalized_kernel} is very close to the objective in \eqref{kernel_estimator}. Therefore, solving the problem below should give an $\epsilon$ approximate solution to \eqref{kernel_estimator}.
		\begin{equation}
		\min_{T, \|T\|_{\mathcal{H}} \leq M} \log\big( \mathbb{E}_{\hat{\mathbb{Q}}^{m} }\big[e^{T}\big]\big) -\mathbb{E}_{\hat{\mathbb{P}}^{n}}\big[T \big]  +\frac{1}{t} \|T\|_{\mathcal{H}}^2
		\label{optimal_penalized_kernel}
		\end{equation}

		We  use Representer Theorem (See \cite{scholkopf2001learning}) to infer that the optimal $T$ for \eqref{optimal_penalized_kernel} that achieves the minimum above can be written as a linear combination 
		\begin{equation}
		T^{*}(.) = \sum_{i=1}^{n+m}\alpha_{i} k(z_i, .)
		\label{kernel_linear_relation}
		\end{equation}
		where $z_i = x_i, \; \forall i \in \{1,...,n\}$ and $z_{n+j}= y_j, \; \forall j \in \{1,...,m\}$.
		We substitute the above expressions from \eqref{kernel_linear_relation} in \eqref{optimal_penalized_kernel} to obtain the following equivalent optimization problem.
		
		\begin{equation}
		\begin{split}
		\min_{\boldsymbol{\alpha},  \boldsymbol{\alpha}^{t}K\boldsymbol{\alpha} \leq M^2}& \log(\frac{1}{m} \sum_{y_j \in \boldsymbol{Y}}e^{\boldsymbol{\alpha}^{t}\boldsymbol{K}[y_j,:]}) - \frac{1}{n}\sum_{x_i \in \boldsymbol{X}} \boldsymbol{\alpha}^{t}\boldsymbol{K}[x_i,:] + \frac{1}{t} \boldsymbol{\alpha}^t \boldsymbol{K}\boldsymbol{\alpha}
		\end{split}
		\label{penalized_optimization_kernel}
		\end{equation}
		Hence, \eqref{penalized_optimization_kernel} is equivalent to 	\eqref{optimal_penalized_kernel}, which gives the $\epsilon$ approximate optimal solution to \eqref{kernel_estimator}. This completes the proof.
	\hfill	$\blacksquare$
	\end{proof} 
	In Proposition 1, we showed that , i.e., $ \hat{\mathsf{KL}}_{\mathcal{H}} (\hat{\mathbb{P}}^n \; || \; \hat{\mathbb{Q}}^{m} ) \approx -g(\boldsymbol{\alpha}^{*})$. Next we discuss how to solve for $\hat{\mathsf{KL}}_{\mathcal{H}} (\hat{\mathbb{P}}^n \; || \; \hat{\mathbb{Q}}^{m} )$ efficiently.
	We  solve \eqref{optimization_kernel_new}  using SGD. See Algorithm 1 for a detailed description. 
	\begin{algorithm}
		\caption{KKLE algorithm to estimate KL divergence}
		\begin{algorithmic}[1]
			\renewcommand{\algorithmicrequire}{\textbf{Input:}}
			\renewcommand{\algorithmicensure}{\textbf{Output:}}
			
			\REQUIRE $\boldsymbol{X}=\{x_i\}_{i=1}^{n}\sim\mathbb{P}$ and $\boldsymbol{Y}=\{y_j\}_{j=1}^{m}\sim \mathbb{Q}$, $\gamma$ (distance from minimum), $\mathsf{max}_{\mathsf{iter}}$ (maximum number of steps), $\eta$ (step size), $k$ (minibatch size)
			\ENSURE  KL divergence estimate
			\\ \textit{Initialization:} Initialize $\boldsymbol{\alpha}$ randomly, $\mathsf{n}_{\mathsf{iter}}=0$, $\mathsf{Convergence}=\mathsf{False}$
			\STATE \textbf{While ($\mathsf{n}_{\mathsf{iter}} \leq \mathsf{max}_{\mathsf{iter}}$ and $\mathsf{Convergence}==\mathsf{False}$)}
			\STATE \textbf{Minibatch sampling:} Sample $k$ samples from $\boldsymbol{X}$ and  $k$ samples from $\boldsymbol{Y}$ 
			\STATE$$\hat{\mathsf{KL}}(\boldsymbol{\alpha})_p = -\log(\frac{1}{m} \sum_{y_j \in \boldsymbol{Y}}e^{\boldsymbol{\alpha}^{t}\boldsymbol{K}[y_j,:]}) + \frac{1}{n}\sum_{x_i \in \boldsymbol{X}} \boldsymbol{\alpha}^{t}\boldsymbol{K}[x_i,:] + \frac{1}{t} \boldsymbol{\alpha}^t \boldsymbol{K}\boldsymbol{\alpha}$$
			\STATE $\alpha = \alpha + \eta\nabla \hat{\mathsf{KL}}(\boldsymbol{\alpha})_p  $
			\STATE $$\hat{\mathsf{KL}}(\boldsymbol{\alpha})_c = -\log(\frac{1}{m} \sum_{y_j \in \boldsymbol{Y}}e^{\boldsymbol{\alpha}^{t}\boldsymbol{K}[y_j,:]}) + \frac{1}{n} \sum_{x_i \in \boldsymbol{X}} \boldsymbol{\alpha}^{t}\boldsymbol{K}[x_i,:] + \frac{1}{t} \boldsymbol{\alpha}^t \boldsymbol{K}\boldsymbol{\alpha}$$
			\STATE   \textbf{If} $\;$ $|\hat{\mathsf{KL}}(\boldsymbol{\alpha})_c-\hat{\mathsf{KL}}(\boldsymbol{\alpha})_p|\leq \gamma$ 
			\STATE $\;\;$ $\mathsf{Convergence} = \mathsf{True}$
			\STATE $\mathsf{n}_{\mathsf{iter}}= \mathsf{n}_{\mathsf{iter}}+1$
			\RETURN $\hat{\mathsf{KL}}(\boldsymbol{\alpha})_{c}$
		\end{algorithmic} 
	\end{algorithm}

	\vspace{1em}
	\begin{proposition} \begin{itemize}\item The optimization problem in \eqref{optimization_kernel_new} is a convex optimization problem. 
			\item Algorithm 1 converges to the optimal solution of \eqref{optimization_kernel_new}.  
		\end{itemize}
	\end{proposition}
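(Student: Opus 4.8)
The plan is to establish the two bullets in turn, proving convexity first since the convergence claim builds on it.

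\textbf{Convexity of \eqref{optimization_kernel_new}.} I would check the objective $g(\boldsymbol{\alpha})+\tfrac1t\boldsymbol{\alpha}^{t}\boldsymbol{K}\boldsymbol{\alpha}$ piece by piece. The term $-\tfrac1n\sum_{x_i\in\boldsymbol{X}}\boldsymbol{\alpha}^{t}\boldsymbol{K}[x_i,:]$ is linear in $\boldsymbol{\alpha}$, hence convex. The term $\log\!\big(\tfrac1m\sum_{y_j\in\boldsymbol{Y}}e^{\boldsymbol{\alpha}^{t}\boldsymbol{K}[y_j,:]}\big)$ equals $\mathrm{lse}(A\boldsymbol{\alpha})-\log m$, where $A$ is the matrix whose $j$-th row is $\boldsymbol{K}[y_j,:]$ and $\mathrm{lse}(\boldsymbol{u})=\log\sum_j e^{u_j}$ is the log-sum-exp function; since $\mathrm{lse}$ is convex and convexity is preserved under composition with the linear map $\boldsymbol{\alpha}\mapsto A\boldsymbol{\alpha}$, and subtracting the constant $\log m$ is harmless, this term is convex. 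Hence $g$ is convex. Since $\boldsymbol{K}$ is the kernel Gram matrix it is positive semidefinite, so $\boldsymbol{\alpha}\mapsto\boldsymbol{\alpha}^{t}\boldsymbol{K}\boldsymbol{\alpha}$ is a convex quadratic and adding $\tfrac1t$ times it keeps the objective convex. The feasible set $\{\boldsymbol{\alpha}:\boldsymbol{\alpha}^{t}\boldsymbol{K}\boldsymbol{\alpha}\le M^{2}\}$ is a sublevel set of this same convex quadratic, hence convex. Therefore \eqref{optimization_kernel_new} minimizes a convex function over a convex set. (When $\boldsymbol{K}$ is singular the feasible set is unbounded along $\ker\boldsymbol{K}$, but those directions give $\|\sum_i\alpha_i k(z_i,\cdot)\|_{\mathcal{H}}^{2}=\boldsymbol{\alpha}^{t}\boldsymbol{K}\boldsymbol{\alpha}=0$ and leave both $g$ and the penalty unchanged, so one may restrict to $\mathrm{range}(\boldsymbol{K})$, where the feasible set is compact and a minimizer is attained.)

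\textbf{Convergence of Algorithm 1.} Algorithm 1 runs stochastic gradient descent on the convex objective of \eqref{optimization_kernel_new} using minibatch gradient estimates. I would appeal to the standard convergence theory of (projected) SGD for constrained convex optimization, which needs three ingredients: (i) convexity of the objective and of the feasible set, shown above; (ii) uniformly bounded gradients on the feasible set, which holds because $\Omega$ is compact and $k$ is continuously differentiable, so every entry of $\boldsymbol{K}$ is bounded, and $\nabla g(\boldsymbol{\alpha})$ is the difference of $\tfrac1n\sum_i\boldsymbol{K}[x_i,:]$ and a convex combination $\sum_j w_j(\boldsymbol{\alpha})\boldsymbol{K}[y_j,:]$ with $w_j(\boldsymbol{\alpha})\ge 0$ and $\sum_j w_j(\boldsymbol{\alpha})=1$, hence bounded by $\max_{z\in\boldsymbol{Z}}\|\boldsymbol{K}[z,:]\|$; and (iii) a stochastic gradient whose conditional expectation equals (or tends to) $\nabla g$, together with a step-size schedule obeying the Robbins--Monro conditions $\sum_\ell\eta_\ell=\infty$, $\sum_\ell\eta_\ell^{2}<\infty$ (or a suitably small constant step with iterate averaging). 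Granting (i)--(iii), the iterates, or their running average, converge to the minimizer of \eqref{optimization_kernel_new}.

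The main obstacle is ingredient (iii). A minibatch estimate of $\nabla g$ is not unbiased, because expectation does not commute with the logarithm in the log-sum-exp term, so the minibatch gradient carries an $O(1/k)$ bias in the minibatch size $k$. I would deal with this either by showing that this bias is summable against the step sizes and can be absorbed into the usual SGD error analysis, or by replacing the estimator with an unbiased variant (e.g. tracking a running estimate of the normalizing sum $\tfrac1m\sum_j e^{\boldsymbol{\alpha}^{t}\boldsymbol{K}[y_j,:]}$). A secondary gap is that Algorithm 1 as written ascends $-g$ without an explicit projection onto $\{\boldsymbol{\alpha}^{t}\boldsymbol{K}\boldsymbol{\alpha}\le M^{2}\}$ and without the $\tfrac1t\boldsymbol{\alpha}^{t}\boldsymbol{K}\boldsymbol{\alpha}$ penalty; I would reconcile this with \eqref{optimization_kernel_new} either by observing that for large $t$ and $M$ the penalty-free, unconstrained iteration tracks \eqref{optimization_kernel_new} up to the tolerance of Proposition 1, or by inserting the projection/penalty step and noting that, since projection onto a convex set is nonexpansive, the convergence argument is unchanged.
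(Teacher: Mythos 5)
Your convexity argument mirrors the paper's exactly: log-sum-exp composed with a linear map is convex, the second term is linear, $\boldsymbol{K}$ being a Gram matrix gives a convex quadratic penalty, and the feasible set is a sublevel set of that quadratic. Your aside on the null space of $\boldsymbol{K}$ when it is only positive semidefinite is actually a touch more careful than the paper, which simply asserts $\boldsymbol{K}$ is positive definite. For convergence, both you and the paper reduce to the standard SGD guarantee for convex, Lipschitz, bounded objectives (the paper cites Shalev-Shwartz and Ben-David, Ch.~14); your bound on $\nabla g$ via ``softmax-weighted average of rows of $\boldsymbol{K}$'' is tidier than the paper's rather loose $\ell_1$ estimate under the assumption $0 \le \boldsymbol{K} \le 1$, but the content is the same.

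Where you genuinely depart from the paper is in flagging two issues that the paper leaves unaddressed, and which its citation to SGD theory does not actually resolve as stated. First, the minibatch gradient of the log-sum-exp term is biased, since the logarithm does not commute with expectation over minibatches, so the usual unbiased-gradient-plus-Robbins--Monro template does not apply verbatim; the paper silently treats the minibatch gradient as if it were unbiased. Second, Algorithm~1 as written neither projects onto $\{\boldsymbol{\alpha}: \boldsymbol{\alpha}^{t}\boldsymbol{K}\boldsymbol{\alpha}\le M^{2}\}$ nor includes the $\tfrac{1}{t}\boldsymbol{\alpha}^{t}\boldsymbol{K}\boldsymbol{\alpha}$ penalty, so it is not literally optimizing \eqref{optimization_kernel_new}, and the bounded-gradient argument itself presupposes that the iterates stay in the constrained region. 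These are real gaps in the paper's own proof, not defects you introduced; your proposed remedies (accounting for the $O(1/k)$ bias, or inserting a projection/penalty step and using nonexpansiveness of the projection) are the right way to close them, though neither appears in the paper.
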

	
	\begin{proof} The first term in the objective in \eqref{optimization_kernel_new} is
		log of sum of exponentials, which is  a convex function (See \cite{boyd2004convex}).  The second term in \eqref{optimization_kernel_new} is linear. Therefore, the objective in \eqref{optimization_kernel_new} is a convex function.
		The matrix $\boldsymbol{K}$ is positive definite (See \cite{scholkopf2001learning}). Hence, the function $\boldsymbol{\alpha}^{t} \boldsymbol{K} \boldsymbol{\alpha}$ is convex. Therefore, the set of $\boldsymbol{\alpha}$ to be searched, i.e., $\boldsymbol{\alpha}^{t} \boldsymbol{K}  \boldsymbol{\alpha} \leq M^2$ is a convex set. This establishes that \eqref{optimization_kernel_new} is a convex optimization problem. 
		
		If the objective function \eqref{optimization_kernel_new} is Lipschitz continuous and convex and bounded, then the Algorithm 1  based procedure would converge to the minimum (See Chapter 14 in \cite{shalev2014understanding}). 
		We want to show that 
		$g(\boldsymbol{\alpha}) = \log(\frac{1}{m} \sum_{y_j \in \boldsymbol{Y}}e^{\boldsymbol{\alpha}^{t}\boldsymbol{K}[y_j,:]}) - \frac{1}{n}\sum_{x_i \in \boldsymbol{X}} \boldsymbol{\alpha}^{t}\boldsymbol{K}[x_i,:]$ is Lipschitz continuous in $\boldsymbol{\alpha}$. It is sufficient to show that the gradient of the function $g$ w.r.t $\alpha$ is bounded.
		Define a function  $$h(t)  = g(x + t(y-x))$$ and $h^{'}(t) = dh(t)/dt$. Observe that $h(0) = g(x)$ and $h(1) = g(y)$. Using chain rule we can write $h^{'}(t) = \nabla_{z} g(z)^{t}|_{z = x+ t(y-x)} (y-x)$
		\begin{equation}
		\begin{split}
		&g(y) - g(x)= \int_{0}^{1} h^{'}(t) dt \\&=  \int_{0}^{1}  \nabla_{z} g(z)^{t}|_{z  = x+ t(y-x)} (y-x) dt  \leq \| \nabla_{z} g(z)\|\|y-x\|
		\end{split}
		\label{lipschitz_eqn}
		\end{equation}

		We write the partial derivative of $g$ w.r.t. each component of $\boldsymbol{\alpha}$ as follows 
		$\frac{\partial g(\alpha) }{\alpha_j} = \frac{\sum_{i=1}^{n+m}e^{\alpha_{j} \boldsymbol{K}[z_i, z_j]}\boldsymbol{K}[z_i, z_j]}{ \sum_{i=1}^{n+m}e^{\alpha^{t}\boldsymbol{K}[z_i,:]}}$. We want to derive a loose  upper bound on $\|\nabla g\|_1$.  To do that we first make the following observation about the matrix $\boldsymbol{K}$.
		We assumed that the samples $x_i$ and $y_j$ that are drawn from the distributions $\mathbb{P}$ and $\mathbb{Q}$ come from a set $\Omega$, which is a compact subset of $\mathbb{R}^{s}$. Since the kernel $k$ is a continuously differentiable function and $\Omega$ is a compact subset we can infer that all the elements in $\boldsymbol{K}$ are bounded.  For simplicity, we assume that $K $ is bounded above by 1 and bounded below by zero. 
		Since all the terms in $\frac{\partial g(\alpha) }{\alpha_j} $ are positive we can say the following

		\begin{equation}
		\begin{split}
		& \|\nabla g\|_1=\frac{ \sum_{j=1}^{n+m}\sum_{i=1}^{n+m}e^{\alpha_{j}\boldsymbol{K}[z_i,z_j]}\boldsymbol{K}[z_i,z_j]}{ \sum_{i=1}^{n}e^{\alpha^{t}\boldsymbol{K}[z_i,:]}} \leq \frac{ \sum_{j=1}^{n}\sum_{i=1}^{n}e^{\alpha_{j}\boldsymbol{K}[z_i,z_j]}\boldsymbol{K}[z_i,z_j]}{ n }\leq \frac{\sum_{j=1}^{n}\sum_{i=1}^{n}e^{\alpha_{j}}}{n}\\ 
		& \leq \max_{\boldsymbol{\alpha}, \boldsymbol{\alpha}^t\boldsymbol{K}\boldsymbol{\alpha} \leq M} \sum_{j=1}^{n} e^{\alpha_j}
		\end{split}
		\label{max_lipsch}
		\end{equation}
		
		Since $\sum_{j=1}^{n} e^{\alpha_j}$ is bounded above in the search space. Therefore, the maximum in \eqref{max_lipsch} has to be finite. Since $\|\nabla g\|_2 \leq \|\nabla g\|_1 $. Hence $\|\nabla g\|_2 $ is bounded above and from \eqref{lipschitz_eqn} we can see that the function $g$ is Lipschitz continuous in $\boldsymbol{\alpha}$. Lastly, it is easy to see that $g$ itself is bounded because $\boldsymbol{K}$ is bounded and $\boldsymbol{\alpha}$ also takes value in a compact set. 
We also need to show that the second term in \eqref{optimization_kernel_new} is also Lipschitz continuous. The gradient of the second term is $2\boldsymbol{K}\boldsymbol{\alpha}$. Let us try to bound the norm of the gradient. 
Before that since we know that $\boldsymbol{K}$ is positive definite and symmetric, we can write the eigendecomposition of $\boldsymbol{K}$ as $\boldsymbol{K} = U \Lambda U^t$, where $U$ is an orthonomal matrix comprised of the eigenvectors of $\boldsymbol{K}$, $\Lambda = \text{diag}[\lambda_1,..., \lambda_{m+n}]$ is the diagonal matrix of the set of eigenvalues $\{\lambda_{i}\}_{i=1}^{m+n}$. 
\begin{equation}
\|\boldsymbol{K}\boldsymbol{\alpha}\|^2 = \boldsymbol{\alpha}^t \boldsymbol{K}^t\boldsymbol{K} \boldsymbol{\alpha} = \boldsymbol{\alpha}^tU^t \Lambda^2 U \boldsymbol{\alpha} = v^{t}\Lambda^2 v \leq \sum_{i}\lambda_{i}^2 \|v\|^2 =  \sum_{i}\lambda_{i}^2 \|\boldsymbol{\alpha}\|^2
\end{equation}

In the last simplification on RHS in the above we use the following.  $v = U \boldsymbol{\alpha}$ and  $\|U\boldsymbol{\alpha}\| = \|\boldsymbol{\alpha}\|$ ($U$ is an orthonormal matrix).   $\boldsymbol{\alpha}^{t} \boldsymbol{K}\boldsymbol{\alpha}$ is bounded $\implies$ $\|\alpha\|$ is also bounded. Hence, $
\|\boldsymbol{K}\boldsymbol{\alpha}\|^2$ is also bounded. 
We have now shown that the objective in \eqref{optimization_kernel_new} is Lipschitz continuous.  From Corollary 14.2 in \cite{shalev2014understanding}, we know that the procedure in Algorithm 1 \footnote{For the proof we are assuming that we use the entire data in one minibatch and follow gradient descent.} converges to the minimum of the  problem \eqref{optimization_kernel_new}. \hfill$\blacksquare$

	\end{proof}


	\subsection{Analyzing the Consistency of KKLE}

	\begin{definition} Strong Consistency:	For all $\eta>0$, if there exists a kernel $k$ and an $N$ such that $\forall n\geq N,m\geq N $such that $	|\hat{\mathsf{KL}}_{\mathcal{H}}(\hat{\mathbb{P}}^{n}\;||\; \hat{\mathbb{Q}}^{m}) -\mathsf{KL}(\mathbb{P}\;||\;\mathbb{Q})| \leq \eta$ then  $\hat{\mathsf{KL}}_{\mathcal{H}}(\hat{\mathbb{P}}^{n}\;||\; \hat{\mathbb{Q}}^{m})$ is a strongly consistent estimator of $\mathsf{KL}(\mathbb{P}\;||\;\mathbb{Q})$
	\end{definition}
	\begin{proposition}
		$\hat{\mathsf{KL}}_{\mathcal{H}}(\hat{\mathbb{P}}^{n}\;||\; \hat{\mathbb{Q}}^{m})$ is a strongly consistent estimator of $\mathsf{KL}(\mathbb{P}\;||\;\mathbb{Q})$
	\end{proposition}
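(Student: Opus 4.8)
The plan is to follow the standard two-term decomposition used for the consistency of Donsker--Varadhan-based estimators, writing
$$|\hat{\mathsf{KL}}_{\mathcal{H}}(\hat{\mathbb{P}}^{n}\;||\;\hat{\mathbb{Q}}^{m}) - \mathsf{KL}(\mathbb{P}\;||\;\mathbb{Q})| \leq \underbrace{|\mathsf{KL}_{\mathcal{H}}(\mathbb{P}\;||\;\mathbb{Q}) - \mathsf{KL}(\mathbb{P}\;||\;\mathbb{Q})|}_{\text{approximation error}} + \underbrace{|\hat{\mathsf{KL}}_{\mathcal{H}}(\hat{\mathbb{P}}^{n}\;||\;\hat{\mathbb{Q}}^{m}) - \mathsf{KL}_{\mathcal{H}}(\mathbb{P}\;||\;\mathbb{Q})|}_{\text{estimation error}},$$
and bounding each term by $\eta/2$. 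For the approximation error, recall from \eqref{kl_dv} that the DV supremum is attained at $T^{*} = \log\frac{d\mathbb{P}}{d\mathbb{Q}}$; since $\Omega$ is compact and (under the mild regularity assumption that the densities of $\mathbb{P}$ and $\mathbb{Q}$ are continuous and bounded away from $0$ and $\infty$ on $\Omega$) $T^{*}$ is a bounded continuous function, I would first pick a continuous $T$ whose DV objective $\mathbb{E}_{\mathbb{P}}[T] - \log\mathbb{E}_{\mathbb{Q}}[e^{T}]$ is within $\eta/4$ of $\mathsf{KL}(\mathbb{P}\;||\;\mathbb{Q})$. Choosing the kernel $k$ to be universal (for instance the Gaussian kernel restricted to $\Omega$, which is continuously differentiable as required), $\mathcal{H}$ is dense in $C(\Omega)$ in the sup-norm, so there is $T_{\mathcal{H}} \in \mathcal{H}$ with $\|T - T_{\mathcal{H}}\|_{\infty}$ arbitrarily small. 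Because $S \mapsto \mathbb{E}_{\mathbb{P}}[S]$ and $S \mapsto \log\mathbb{E}_{\mathbb{Q}}[e^{S}]$ are continuous in the sup-norm on bounded sets (the latter since $e^{(\cdot)}$ and $\log$ are Lipschitz on the relevant compact ranges), $T_{\mathcal{H}}$ achieves the DV objective within $\eta/2$; setting $M \geq \|T_{\mathcal{H}}\|_{\mathcal{H}}$ then gives $\mathsf{KL}(\mathbb{P}\;||\;\mathbb{Q}) - \mathsf{KL}_{\mathcal{H}}(\mathbb{P}\;||\;\mathbb{Q}) \leq \eta/2$, and the reverse inequality $\mathsf{KL}_{\mathcal{H}} \leq \mathsf{KL}$ was already established after \eqref{kernel_information_measure}.

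For the estimation error, the key observation is that the feasible set $B_{M} := \{T \in \mathcal{H} : \|T\|_{\mathcal{H}} \leq M\}$ is a uniform Glivenko--Cantelli class. By the reproducing property $|T(z)| = |\braket{T, k(z,\cdot)}_{\mathcal{H}}| \leq M\sqrt{k(z,z)} \leq M\sqrt{\kappa}$ with $\kappa := \sup_{z \in \Omega} k(z,z) < \infty$ (finite since $k$ is continuous and $\Omega$ compact), and $|T(z) - T(z')| \leq M\|k(z,\cdot) - k(z',\cdot)\|_{\mathcal{H}}$, which is small for $z$ near $z'$ because $k$ is continuously differentiable on the compact set $\Omega$; hence $B_{M}$ is uniformly bounded and equicontinuous, so totally bounded in $C(\Omega)$ by Arzel\`a--Ascoli. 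A uniform law of large numbers then yields $\sup_{T \in B_{M}} |\mathbb{E}_{\hat{\mathbb{P}}^{n}}[T] - \mathbb{E}_{\mathbb{P}}[T]| \to 0$ almost surely as $n \to \infty$, and, applied to the bounded equicontinuous class $\{e^{T} : T \in B_{M}\}$ together with the fact that $\log$ is Lipschitz on $[e^{-M\sqrt{\kappa}}, e^{M\sqrt{\kappa}}]$, also $\sup_{T \in B_{M}} |\log\mathbb{E}_{\hat{\mathbb{Q}}^{m}}[e^{T}] - \log\mathbb{E}_{\mathbb{Q}}[e^{T}]| \to 0$ almost surely as $m \to \infty$. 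Since the supremum of a uniformly convergent family converges to the supremum of the limit, $\hat{\mathsf{KL}}_{\mathcal{H}}(\hat{\mathbb{P}}^{n}\;||\;\hat{\mathbb{Q}}^{m}) \to \mathsf{KL}_{\mathcal{H}}(\mathbb{P}\;||\;\mathbb{Q})$ almost surely; choose $N$ so that for all $n, m \geq N$ this estimation error is at most $\eta/2$. Combining with the approximation bound via the triangle inequality gives $|\hat{\mathsf{KL}}_{\mathcal{H}}(\hat{\mathbb{P}}^{n}\;||\;\hat{\mathbb{Q}}^{m}) - \mathsf{KL}(\mathbb{P}\;||\;\mathbb{Q})| \leq \eta$.

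The main obstacle is the approximation step: making it rigorous requires (i) that the optimal DV witness $T^{*} = \log\frac{d\mathbb{P}}{d\mathbb{Q}}$ be bounded and continuous, which needs an explicit regularity assumption on $\mathbb{P}$ and $\mathbb{Q}$ (e.g. continuous densities bounded above and below on the compact $\Omega$), and (ii) that the chosen kernel be universal, so that its RKHS is dense in $C(\Omega)$. A secondary point is that $M$ and the kernel are allowed to depend on $\eta$ --- the definition of strong consistency stated above permits exactly this --- so the claim is really that the \emph{family} of estimators indexed by $(k, M)$ contains, for every target accuracy $\eta$, a member whose error is eventually below $\eta$. The uniform law of large numbers over the ball $B_{M}$ is standard once equicontinuity is in hand; alternatively one could invoke a Rademacher-complexity bound for RKHS balls with bounded kernels to get the same conclusion with explicit rates, but for mere consistency the Arzel\`a--Ascoli plus ULLN route is the cleanest.
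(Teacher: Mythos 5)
Your proof is correct and uses the same high-level decomposition as the paper---bounding the error by an approximation term $|\mathsf{KL}_{\mathcal{H}} - \mathsf{KL}|$ plus an estimation term $|\hat{\mathsf{KL}}_{\mathcal{H}} - \mathsf{KL}_{\mathcal{H}}|$, handling the latter with a uniform law of large numbers and the former with the expressiveness of the RKHS. Where you differ is in the technical machinery. The paper first reduces to a finite-dimensional feature map $\boldsymbol{\Phi}$, parametrizes $T = \boldsymbol{\beta}^t\boldsymbol{\Phi}$, invokes the ULLN over the compact $\boldsymbol{\beta}$-ball citing van der Vaart, and then handles the approximation step by appealing to radial-basis universal-approximation results (Park--Sandberg) together with a random-features argument to justify the finite-basis reduction. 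You instead stay in the infinite-dimensional RKHS ball $B_M$ throughout: you get the uniform GC property directly via the reproducing property, equicontinuity, and Arzel\`a--Ascoli, and you get approximation via universality of the kernel (density of $\mathcal{H}$ in $C(\Omega)$). Your route is arguably cleaner, avoiding the intermediate finite-dimensional reduction, and---importantly---it makes explicit two hypotheses the paper leaves implicit: (i) regularity of $\mathbb{P},\mathbb{Q}$ so that the DV witness $T^* = \log\frac{d\mathbb{P}}{d\mathbb{Q}}$ is bounded and continuous (the paper just says ``assume smooth''), and (ii) that the kernel and the bound $M$ are allowed to depend on the target accuracy $\eta$, which is precisely what the paper's Definition 1 permits but does not flag as load-bearing. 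Both approaches are sound; yours has the advantage of surfacing the assumptions, the paper's has the advantage of connecting directly to the random-features implementation used in its experiments.
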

	
	\begin{proof}
		The proof of this Proposition follows the same steps as the Proof in \cite{belghazi2018mine}. 
		Since we are in a setting where the consistency depends on the expressiveness of the Hilbert Space, which is different from the setting in \cite{belghazi2018mine}, we have to redo the proof for this case.   We divide the proof into two parts. 
		
		For simplicity, we assume that the Hilbert space $\mathcal{H}$ has a finite dimensional basis $\boldsymbol{\Phi}$. Hence, every function in $\mathcal{H}$ can be written as $T(z) = \beta^{t}\boldsymbol{\Phi(z)}$.   We substitute this form of function in \eqref{kernel_estimator} to obtain
		\begin{equation}
		\begin{split}
		&\hat{\mathsf{KL}}_{\mathcal{H}}(\hat{\mathbb{P}}^n\; || \; \hat{\mathbb{Q}}^m)=-\min_{\boldsymbol{\beta}, \|\beta\| \leq M}\Big[ \log(\frac{1}{m} \sum_{y_j \in \boldsymbol{Y}}e^{\boldsymbol{\beta}^{t}\boldsymbol{\Phi}(y_j)}) - \frac{1}{n}\sum_{x_i \in \boldsymbol{X}} \boldsymbol{\beta}^{t}\boldsymbol{\Phi}(x_i)\Big]
		\end{split}
		\label{optimization_kernel}
		\end{equation}
	\end{proof}
	
	Note that the assumption will not limit us from extending the proof to infinite basis (We can approximate an infinite radial basis function kernel with a finite radial basis \cite{rahimi2008random}). Next we show that the  estimator from   \eqref{optimization_kernel} is a consistent estimator of \eqref{kernel_information_measure}.	
	%
	




	We use the triangle inequality to arrive at the following.
	\begin{equation}
	\begin{split}
	&|\hat{\mathsf{KL}}_{\mathcal{H}}(\hat{\mathbb{P}}^n\; || \; \hat{\mathbb{Q}}^m) -\mathsf{KL}_{\mathcal{H}} (\mathbb{P}\;||\;\mathbb{Q}) | \leq \max_{\beta, \|\beta\|\leq M}\Big(|\frac{1}{n}\big[\sum_{x_i \in \boldsymbol{X}} \boldsymbol{\beta}^{t}\boldsymbol{\Phi}(x_i)\big]-\mathbb{E} \big[ \boldsymbol{\beta}^{t}\boldsymbol{\Phi}(x_i)\big]| \Big) \\ 
	& + \max_{\beta, \|\beta\|\leq M}|\log(\frac{1}{m} \sum_{y_j \in \boldsymbol{Y}}e^{\boldsymbol{\beta}^{t}\boldsymbol{\Phi}(y_j)}) -\log(\mathbb{E} \big[e^{\boldsymbol{\beta}^{t}\boldsymbol{\Phi}(y_j)}\big]) | 
	\end{split}
	\end{equation}
	$\boldsymbol{\Phi}$  is a continuous function and since the outcomes are drawn from $\Omega$, a compact subset in $\mathbb{R}$, $\boldsymbol{\Phi}$ is bounded.
	$\boldsymbol{\beta}^{t} \boldsymbol{\Phi}$ is  bounded over the set $\|\boldsymbol{\beta}\| \leq M$. The space of parameters $\boldsymbol{\beta}$ is compact because the norm of $\|\boldsymbol{\beta}\|$ is bounded.  These  observations allow us to use  \cite{van2000empirical} to show the following for a sufficiently large $N$ and $n\geq N$ 
	$$
	\max_{\beta, \|\beta\| \leq M}\Big(|\frac{1}{n}\big[\sum_{x_i \in \boldsymbol{X}} \boldsymbol{\beta}^{t}\boldsymbol{\Phi}(x_i)\big]-\mathbb{E} \big[ \boldsymbol{\beta}^{t}\boldsymbol{\Phi}(x_i)\big]| \Big) \leq \eta/2$$
	

	Similarly $\log(\mathbb{E} \big[e^{\boldsymbol{\beta}^{t}\boldsymbol{\Phi}_{i}}\big])$ is also bounded in  $\|\boldsymbol{\beta}\| \leq M$.
	
	Similarly, for a sufficiently large $N$ and $m\geq N$ we have $$\max_{\beta, \|\beta\|\leq M}|\log(\frac{1}{m} \sum_{y_j \in \boldsymbol{Y}}e^{\boldsymbol{\beta}^{t}\boldsymbol{\Phi}(y_j)}) -\log(\mathbb{E} \big[e^{\boldsymbol{\beta}^{t}\boldsymbol{\Phi}(y_j)}\big]) | \leq \eta/2 $$

	The next question we are interested in is if there exists a finite basis that is good enough.  We use radial basis functions (Gaussian radial basis in particular) with finite number of centers  \cite{wu2012using}.  Suppose we use a weighted sum of the radial basis functions to learn the mutual information.    	In \cite{buhmann2003radial} \cite{park1991universal} \cite{wu2012using}, it is shown that finite radial basis functions can approximate arbitrary functions.        
	We  assume that the function that achieves optimal for \eqref{kl_dv} is smooth (This assumption is also made in  \cite{hornik1991approximation} and \cite{belghazi2018mine}).

	
	%
	
	
	Let $T^{*} = \log\frac{d\mathbb{P}}{d\mathbb{Q}}$. By construction $T^{*}$ satisfies
	
	$\mathbb{E}_{\mathbb{P}} [T^{*}] = \mathsf{KL}(\mathbb{P}\;||\;\mathbb{Q})$ and 
	$\mathbb{E}_{\mathbb{Q}}[e^{T^{*}}] = 1$. Suppose we allow for $\eta$ tolerance on the error on the function we want to approximate.  For a fixed $\eta$, we can derive a finite basis which can approximate any smooth function as shown in \cite{park1991universal}.  Suppose a finite radial basis function spans the RKHS and let $T$ be the function that achieves the maximizer for \eqref{kernel_estimator}.
	For a function $T$ we can write the gap between the true KL divergence and KL divergence achieved by restricting the search to the Hilbert Space as
	$$\mathsf{KL}(\mathbb{P}\;||\;\mathbb{Q}) - \mathsf{KL}_{\mathcal{H}}(\mathbb{P}\;||\;\mathbb{Q}) = \mathbb{E}_{\mathbb{P}} [T^{*}- T] + \mathbb{E}_{\mathbb{Q}} [e^{T^{*}}- e^T]  $$

	We can select a large enough radial basis (Theorem 1 in \cite{park1991universal}) such that $$ \mathbb{E}_{\mathbb{P}} [T^{*}- T] \leq \eta/2$$
	$$ \mathbb{E}_{\mathbb{Q}} [e^{T^{*}}- e^T] \leq \eta/2$$
	Both the above conditions hold simultaneously because $e^x$ is Lipschitz continuous and $T$ is bounded $\|T\|_{\mathcal{H}}\leq M$.\hfill$\blacksquare$
	
	We established that the proposed estimator is strongly consistent. In the next section, we analyze the complexity and convergence properties of KKLE.

	\subsection{Convergence and Complexity}

	The approach in Algorithm 1 optimizes the objective in \eqref{optimization_kernel_new}.  The number of steps before which the algorithm is guaranteed to converge is computed using \cite{shalev2014understanding}. The number steps grow as $\mathcal{O}(\frac{\rho^2}{\epsilon^2})$, where $\rho$ is the Lipschitz constant for the loss function and $\gamma$ is the tolerance in maximum distance from the minimum value of the loss (also defined in Algorithm 1).

	The dimension of $\boldsymbol{\alpha}$ vector is $n+m$ and the dimension of the kernel matrix $\boldsymbol{K}$  is $m+n \times m+n$.  Computing and storing this matrix can be a problem if the data is too large. 
	The time complexity of the algorithm is given as $\mathcal{O}(\mathsf{max_{iter}} (m+n)^2 )$, where $\mathsf{max_{iter}}$ is the maximum number of steps in the Algorithm 1 and $(m+n)^2$ is the computational cost per step.

	If the size of the data is large, then solving the above problem can be slow. We  use \cite{rahimi2008random} to improve the computational speed. In \cite{rahimi2008random}, the authors derive an approximation in terms of a lower $d$ dimensional mapping $\phi$ to approximately reproduce the kernel $k$. The complexity with this approximation drops to   $\mathcal{O}(\mathsf{max_{iter}} (m+n)d  )$.  In the experiments section, we use this trick to improve the complexity. 
	
	Before going to experiments, we conclude this section with an illustrative comparison of KKLE with MINE.
	In Figure \ref{fig}, we compare the two estimators (KKLE and MINE) for the case when RKHS is finite dimensional. For MINE all the layers of the neural network are trained to optimize the objective \eqref{MINE_estimator}. For KKLE, the first layer projects the data into a higher dimensional basis of RKHS. The second and the final layer is trained to optimize \eqref{kernel_estimator}.

	%
	


	%
	%
	%
	%
	
		\begin{figure}
		\centering
		\includegraphics[width=5in]{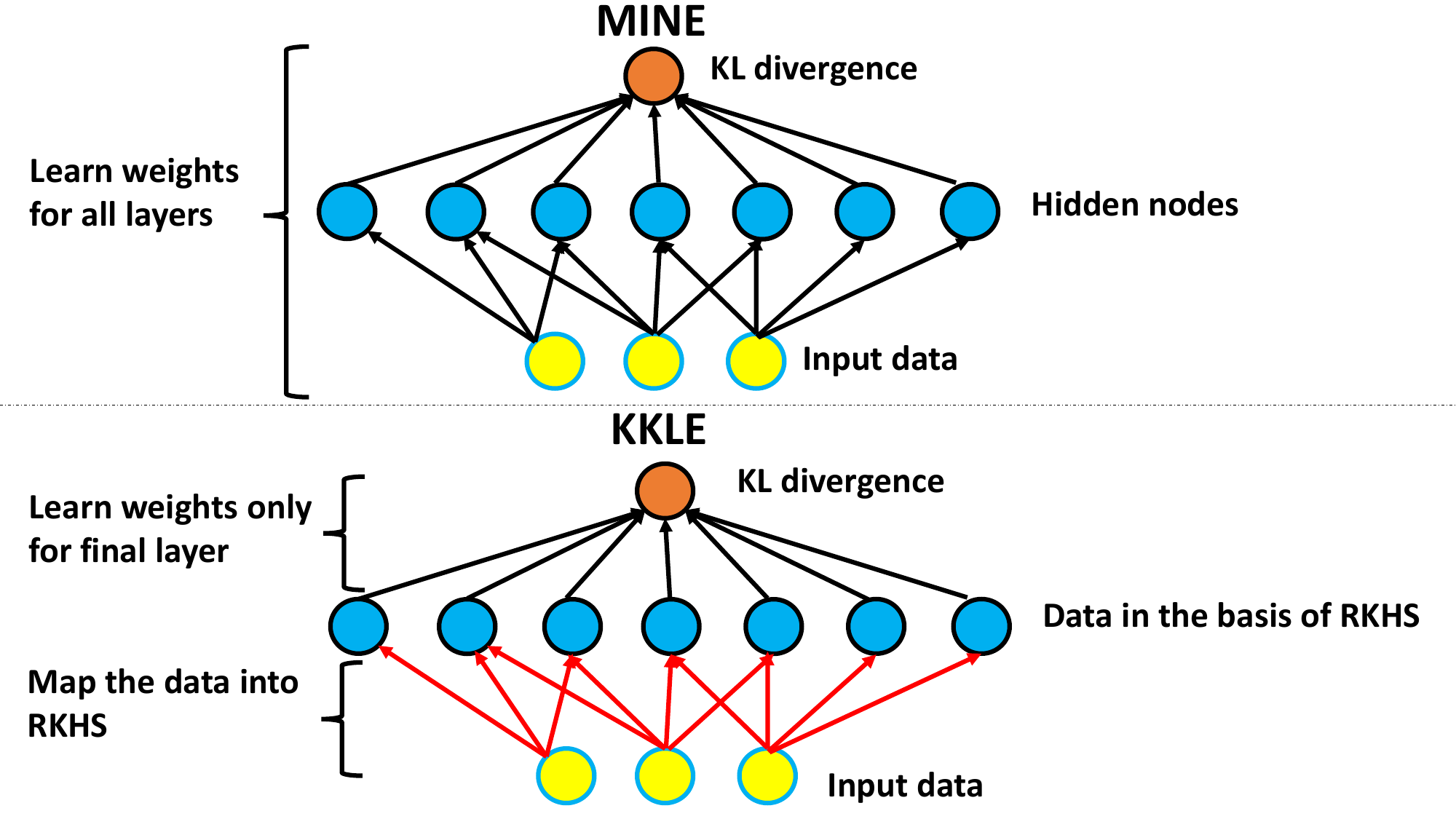}
		\caption{KKLE vs MINE: Comparison of the estimation algorithms}
		\label{fig}
	\end{figure}
	\section{Experiments}
	\subsection{Comparisons}
	\subsubsection{Setup} We use the same setting as in \cite{poole2018variational} \cite{belghazi2018mine}. We compare MINE estimator with KKLE estimator on the task of estimating mutual information,  which as described earlier can also be represented in terms of the KL divergence. There are two random vectors $\boldsymbol{X} \in \mathbb{R}^{D}$ and $\boldsymbol{Y} \in \mathbb{R}^{D}$, where $X_k$ and $Y_k$ are the $k^{th}$ components of $\boldsymbol{X}$ and $\boldsymbol{Y}$ respectively. $(X_k,Y_k)$ is drawn from a 2-dimensional Gaussian distribution with $0$ mean and $\rho$ correlation. The true mutual information in this case can be analytically computed and is given as $-\frac{D}{2} \log(1-\rho^2)$. We are given a dataset with $N$ i.i.d. samples from the distribution of $(\boldsymbol{X}, \boldsymbol{Y})$. 
	In the next section, we compare the performance of the proposed KKLE estimator with MINE estimator in terms of the following metrics: Bias of the estimator, root mean squared error in the estimation (RMSE), variance in the estimator values. 
	All the simulations were done on a 2.2GHz Intel Core i7 processor, with 16 GB memory using Tensorflow in Python. We use \cite{rahimi2008random} to map the features and reduce the computational costs. The comparisons are done for two scenarios, when the dataset is very large, and when the dataset is small. 

	
	\subsubsection{Comparisons for large data} 
	
	In this section, our goal is to compare the two estimators for a  sufficiently large dataset ($N=10^5$) to show both the estimators are consistent.  We sample $N=10^5$  $(\boldsymbol{X}, \boldsymbol{Y})$ from the distribution described above for $D=1$ and $D=5$.
	We  compare the bias, RMSE, and variance of the proposed KKLE estimator with the MINE estimator.  The minibatch size for the gradient descent is 5000. In each step a minibatch is sampled and a gradient step is taken. The total number of gradient steps is 1000. In Table 1, we provide the comparisons for $D=1$ and $D=5$. The results in the Table 1 are averaged over 100 trials. We observe that the performance of both the estimators are similar. Note that both the estimators degrade in the setting when dimensionality of the data becomes large.

	\begin{table*}
	\centering	\caption{KKLE vs MINE estimator for large data}
	\begin{tabular}
		{cccccccc}
		\toprule
		Estimator & D &      Bias &      RMSE &  Variance &  Correlation &  Mutual Information \\
		\midrule
		MINE &1  	& -0.009442 &  0.011378 &  0.000040 &        0.2 &                 0.020411 \\
		MINE &1 	 &0.025266 &  0.030608 &  0.000299&     0.5 &                 0.143841 \\
		MINE &1	 & -0.060696 &  0.075414 &  0.002003 &      0.9 &                 0.830366 \\
		
		\midrule
		KKLE &1	 & -0.009221 &  0.010990 &  0.000036 &        0.2 &                 0.020411 \\
		KKLE  &1 	 & -0.025688 &  0.030982 &  0.000300  &      0.5 &                 0.143841 \\
		KKLE &1 	 & -0.065784 &  0.079743 &  0.002031 &        0.9 &                 0.830366 \\
		\bottomrule
		
		
		MINE & 	5 & -0.020874 &  0.024841&    0.000181    & 0.2 &                 0.102055 \\
		MINE & 	5 & -0.072369&  0.09106 &      0.003055   &0.5 &                 0.719205 \\
		MINE &	5 & -0.415350 &  0.758026 &      0.402088   &  0.9 &                 4.151828 \\
		\midrule
		KKLE		&5 & -0.006116  & 0.038716  &    0.00146  &  0.2  &              0.102055   \\
		KKLE		&5 & -0.046382&  0.116801&    0.011491    &  0.5 &                 0.719205 \\
		KKLE		& 5 & -0.622219 & 0.979745& 0.572215 &  0.9& 4.151828                       \\
		\bottomrule
	\end{tabular}
	
\end{table*}
	
	\subsubsection{Comparison for small data} 
	
	In this section, our goal is to compare the two estimators for a small dataset ($N=100$). We  compare the bias, RMSE, and the variance of the  KKLE estimator with the MINE estimator.  Since the size of the data is small using minibatches did not help. Hence, we use the whole data and run the simulation for 100 gradient steps. In Table 2, we provide the comparisons averaged over 100 trials. We compare the estimators for $D=1$ scenario.  We find that the KKLE estimator has a much lower bias, variance, and  RMSE value. For $D=5$ scenario both the estimators are not reliable for the small dataset setting. Hence, the comparisons in this setting did not provide any insights and were not reported.

	\begin{figure}
	\centering
	\includegraphics[trim= 0 80 0 40, width=7in]{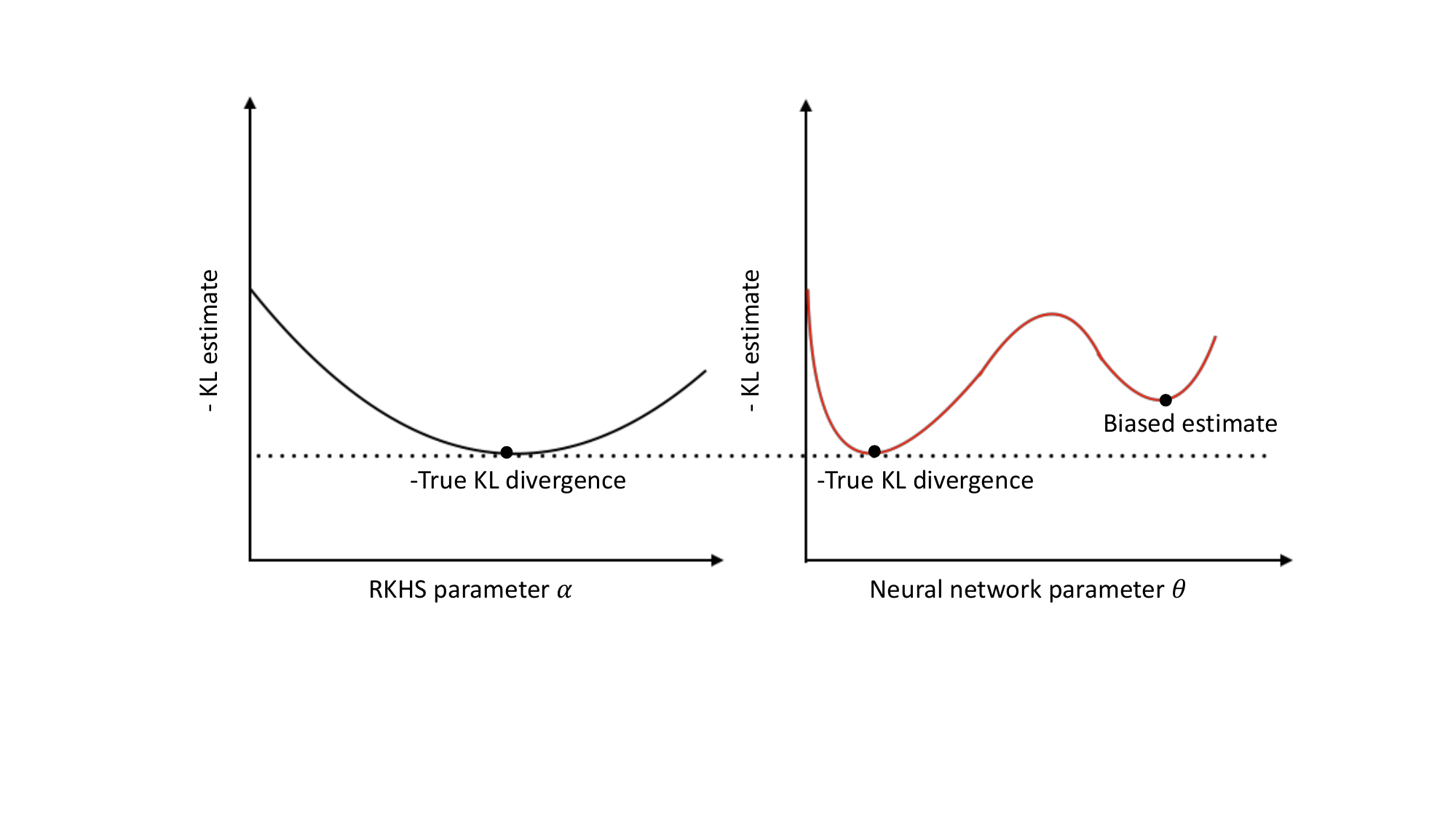}
	\caption{KKLE vs MINE: Comparing the bias }
	\label{fig1}
\end{figure}

	\begin{table*}
	\centering
	\caption{KKLE  vs MINE estimator for small data}
	\begin{tabular}{ccccccc}
		\toprule
		Estimator  &      Bias &      RMSE &      Variance  &  Correlation &  True Mutual Information \\
		\midrule
		MINE	 & 0.09390 &  0.1044 &  0.002091 &       0.2 &                 0.020411 \\
		MINE 	 &0.06810 &  0.1128 &  0.008096    &  0.5 &                 0.143841 \\
		MINE & -0.29106&  0.5123&  0.177734  &   0.9 &                 0.830366 \\
		
		\midrule
		KKLE & 0.04999 &  0.0733&  0.000288 &         0.2 &                 0.020411 \\
		KKLE & 0.06152 &  0.1254 &  0.011950  & 0.5 &                 0.143841 \\
		KKLE & 0.00855 &  0.1833&  0.033570  &0.9 &                 0.830366 \\
		\bottomrule
	\end{tabular}
\end{table*}
	\subsection{Explaining KKLE's performance}
	We conclude that for smaller datasets and smaller dimensions the KKLE estimator performs better than the MINE estimator. 
	When the datasets are very large both MINE and KKLE estimator perform well. 
	\begin{itemize}
		\item The loss surface for MINE is non-convex in the parameters and thus different trials lead to different minimas being achieved thus leading to a higher variance than KKLE,  which searches over a convex loss surface.
		
		\item Hypothetically assume that the search space for KKLE is the same as MINE.
		In such a case, the optimizer for KKLE is likely to have a lower bias and RMSE as it will always find the best minima, which is not true for MINE. We illustrate this in Figure \ref{fig1}.
	\end{itemize}

	\subsection{Application to Metrics for Fairness} There are many applications for mutual information. In this section, we propose another application that can directly benefit from the proposed estimator. Machine learning methods are used in many daily life applications. In many of these applications such as deciding whether to give a loan, hiring decisions, it is very important that the algorithm be fair. There are many definitions of fairness that have been proposed in the literature \cite{speicher2018unified}. 
	We discuss the three most commonly used definitions of fairness here. 
	
	\begin{itemize}
		\item \textbf{Demographic Parity.} A predictor is said to  satisfy demographic parity if the predictor is independent of the protected attribute (for instance, race, gender, etc.).
		\item \textbf{Equality of Odds.} A predictor satisfies equality of odds if the predictor and the protected attribute are independent conditional on the outcomes.
		\item  \textbf{Equality of Opportunity} A predictor satisfies equality of opportunity with respect to a certain class if the predictor and the protected attribute are independent conditional on the class.
	\end{itemize}
	These definitions provide a condition to measure fairness. These conditions serve as a hard constraint and may not be satisfied by any algorithm. Hence, it is important to provide metrics that measure the extent to which these conditions are satisfied. Current works \cite{bellamy2018ai} mainly implement these metrics for fairness when the protected attribute is a categorical variable. Extending these metrics to settings when the protected attribute is continuous (for instance, income level, etc.) is not obvious (See the future works mentioned in \cite{donini2018empirical}). 


	We propose to express these fairness criteria in terms of mutual information. Expressing it in terms of mutual information has two advantages: a) We can understand the extent to which the criterion is satisfied as the new definition won't be a mere hard constraint, and
	b) Dealing with  protected attributes that are continuous (for e.g., income level) becomes more natural.
	
	We give the mathematical formulation next.  Suppose that the predictor random variable is given as $Y^{p}$ (for instance, the prediction that the individual would default on the loan), the ground truth is $Y$ (for instance, if the person actually defaults on the loan), and the protected attribute is given as $A$ (for instance, race, income level etc.).

	\begin{itemize}
		\item \textbf{Demographic Parity} $Y^{p } \perp A \Leftrightarrow I(Y^{p}; A) = 0$
		\item \textbf{Equality of Odds} $Y^{p } \perp A\;|\;Y \Leftrightarrow I(Y^{p}; A\;|\;Y) = 0$
		\item  \textbf{Equality of Opportunity} $Y^{p } \perp A\;|\;Y=1 \Leftrightarrow I(Y^{p}; A\;|\;Y=1) = 0$
	\end{itemize}
	
	Therefore, for each of the above definitions, we require the appropriate value of mutual information to be low. Hence, we can compare the extent of fairness for different machine learning models in terms of the mutual information estimate. In each of the above definitions, we are only required to estimate the mutual information between two random variables, which is good as we know that mutual information estimation is  reliable in lower dimensions. It would be interesting to investigate mutual information based fairness constraints. Further investigation of  mutual information based metrics for fairness in machine learning is an interesting future work.


	\section{Conclusion}
	We propose a new estimator for KL divergence based on kernel machines. We prove that the proposed estimator is consistent. Empirically, we find that the proposed estimator can be more reliable than the existing estimator MINE in different settings. We also provide insights into when KKLE is expected to do better than MINE. 
	
	\section{Acknowledgements}
	
	We would like to thank Prof. Gregory Pottie (University of California, Los Angeles) and Sherjil Ozair (University of Montreal) for valuable discussions and references.
	\bibliographystyle{IEEEtran}
	\bibliography{MIKE_ASILOMAR1}

\begin{thebibliography}{10}
\providecommand{\url}[1]{#1}
\csname url@samestyle\endcsname
\providecommand{\newblock}{\relax}
\providecommand{\bibinfo}[2]{#2}
\providecommand{\BIBentrySTDinterwordspacing}{\spaceskip=0pt\relax}
\providecommand{\BIBentryALTinterwordstretchfactor}{4}
\providecommand{\BIBentryALTinterwordspacing}{\spaceskip=\fontdimen2\font plus
\BIBentryALTinterwordstretchfactor\fontdimen3\font minus
  \fontdimen4\font\relax}
\providecommand{\BIBforeignlanguage}[2]{{%
\expandafter\ifx\csname l@#1\endcsname\relax
\typeout{** WARNING: IEEEtran.bst: No hyphenation pattern has been}%
\typeout{** loaded for the language `#1'. Using the pattern for}%
\typeout{** the default language instead.}%
\else
\language=\csname l@#1\endcsname
\fi
#2}}
\providecommand{\BIBdecl}{\relax}
\BIBdecl

\bibitem{peng2005feature}
H.~Peng, F.~Long, and C.~Ding, ``Feature selection based on mutual information:
  criteria of max-dependency, max-relevance, and min-redundancy,'' \emph{IEEE
  Transactions on Pattern Analysis \& Machine Intelligence}, no.~8, pp.
  1226--1238, 2005.

\bibitem{romano2014standardized}
S.~Romano, J.~Bailey, V.~Nguyen, and K.~Verspoor, ``Standardized mutual
  information for clustering comparisons: one step further in adjustment for
  chance,'' in \emph{International Conference on Machine Learning}, 2014, pp.
  1143--1151.

\bibitem{chen2016infogan}
X.~Chen, Y.~Duan, R.~Houthooft, J.~Schulman, I.~Sutskever, and P.~Abbeel,
  ``Infogan: Interpretable representation learning by information maximizing
  generative adversarial nets,'' in \emph{Advances in neural information
  processing systems}, 2016, pp. 2172--2180.

\bibitem{belghazi2018mine}
I.~Belghazi, S.~Rajeswar, A.~Baratin, R.~D. Hjelm, and A.~Courville, ``Mine:
  mutual information neural estimation,'' \emph{arXiv preprint
  arXiv:1801.04062}, 2018.

\bibitem{donsker1983asymptotic}
M.~D. Donsker and S.~S. Varadhan, ``Asymptotic evaluation of certain markov
  process expectations for large time. iv,'' \emph{Communications on Pure and
  Applied Mathematics}, vol.~36, no.~2, pp. 183--212, 1983.

\bibitem{kraskov2004estimating}
A.~Kraskov, H.~St{\"o}gbauer, and P.~Grassberger, ``Estimating mutual
  information,'' \emph{Physical review E}, vol.~69, no.~6, p. 066138, 2004.

\bibitem{perez2008kullback}
F.~P{\'e}rez-Cruz, ``Kullback-leibler divergence estimation of continuous
  distributions,'' in \emph{2008 IEEE international symposium on information
  theory}.\hskip 1em plus 0.5em minus 0.4em\relax IEEE, 2008, pp. 1666--1670.

\bibitem{nguyen2010estimating}
X.~Nguyen, M.~J. Wainwright, and M.~I. Jordan, ``Estimating divergence
  functionals and the likelihood ratio by convex risk minimization,''
  \emph{IEEE Transactions on Information Theory}, vol.~56, no.~11, pp.
  5847--5861, 2010.

\bibitem{hornik1991approximation}
K.~Hornik, ``Approximation capabilities of multilayer feedforward networks,''
  \emph{Neural networks}, vol.~4, no.~2, pp. 251--257, 1991.

\bibitem{scholkopf2001learning}
B.~Scholkopf and A.~J. Smola, \emph{Learning with kernels: support vector
  machines, regularization, optimization, and beyond}.\hskip 1em plus 0.5em
  minus 0.4em\relax MIT press, 2001.

\bibitem{boyd2004convex}
S.~Boyd and L.~Vandenberghe, \emph{Convex optimization}.\hskip 1em plus 0.5em
  minus 0.4em\relax Cambridge university press, 2004.

\bibitem{shalev2014understanding}
S.~Shalev-Shwartz and S.~Ben-David, \emph{Understanding machine learning: From
  theory to algorithms}.\hskip 1em plus 0.5em minus 0.4em\relax Cambridge
  university press, 2014.

\bibitem{rahimi2008random}
A.~Rahimi and B.~Recht, ``Random features for large-scale kernel machines,'' in
  \emph{Advances in neural information processing systems}, 2008, pp.
  1177--1184.

\bibitem{van2000empirical}
S.~A. van~de Geer and S.~van~de Geer, \emph{Empirical Processes in
  M-estimation}.\hskip 1em plus 0.5em minus 0.4em\relax Cambridge university
  press, 2000, vol.~6.

\bibitem{wu2012using}
Y.~Wu, H.~Wang, B.~Zhang, and K.-L. Du, ``Using radial basis function networks
  for function approximation and classification,'' \emph{ISRN Applied
  Mathematics}, vol. 2012, 2012.

\bibitem{buhmann2003radial}
M.~D. Buhmann, \emph{Radial basis functions: theory and implementations}.\hskip
  1em plus 0.5em minus 0.4em\relax Cambridge university press, 2003, vol.~12.

\bibitem{park1991universal}
J.~Park and I.~W. Sandberg, ``Universal approximation using
  radial-basis-function networks,'' \emph{Neural computation}, vol.~3, no.~2,
  pp. 246--257, 1991.

\bibitem{poole2018variational}
B.~Poole, S.~Ozair, A.~van~den Oord, A.~A. Alemi, and G.~Tucker, ``On
  variational lower bounds of mutual information,'' in \emph{NeurIPS Workshop
  on Bayesian Deep Learning}, 2018.

\bibitem{speicher2018unified}
T.~Speicher, H.~Heidari, N.~Grgic-Hlaca, K.~P. Gummadi, A.~Singla, A.~Weller,
  and M.~B. Zafar, ``A unified approach to quantifying algorithmic unfairness:
  Measuring individual \&group unfairness via inequality indices,'' in
  \emph{Proceedings of the 24th ACM SIGKDD International Conference on
  Knowledge Discovery \& Data Mining}.\hskip 1em plus 0.5em minus 0.4em\relax
  ACM, 2018, pp. 2239--2248.

\bibitem{bellamy2018ai}
R.~K. Bellamy, K.~Dey, M.~Hind, S.~C. Hoffman, S.~Houde, K.~Kannan, P.~Lohia,
  J.~Martino, S.~Mehta, A.~Mojsilovic \emph{et~al.}, ``Ai fairness 360: An
  extensible toolkit for detecting, understanding, and mitigating unwanted
  algorithmic bias,'' \emph{arXiv preprint arXiv:1810.01943}, 2018.

\bibitem{donini2018empirical}
M.~Donini, L.~Oneto, S.~Ben-David, J.~Shawe-Taylor, and M.~Pontil, ``Empirical
  risk minimization under fairness constraints,'' \emph{arXiv preprint
  arXiv:1802.08626}, 2018.

\end{thebibliography}
\end{document}